\newtheorem{proposition}{Proposition}
\title{Splitting Gaussian Process Regression for\\ Streaming Data}
\author{%
  Nick Terry\\
  Department of Industrial and Systems Engineering\\
  University of Washington\\
  Seattle, WA 98105 \\
  \texttt{pnterry@uw.edu} \\
  \And
  Youngjun Choe\\
  Department of Industrial and Systems Engineering\\
  University of Washington\\
  Seattle, WA 98105 \\
  \texttt{ychoe@uw.edu} \\
}
\begin{document}

\maketitle

\begin{abstract}
  Gaussian processes offer a flexible kernel method for regression. While Gaussian processes have many useful theoretical properties and have proven practically useful, they suffer from poor scaling in the number of observations. In particular, the cubic time complexity of updating standard Gaussian process models make them generally unsuitable for application to streaming data. We propose an algorithm for sequentially partitioning the input space and fitting a localized Gaussian process to each disjoint region. The algorithm is shown to have superior time and space complexity to existing methods, and its sequential nature permits application to streaming data. The algorithm constructs a model for which the time complexity of updating is tightly bounded above by a pre-specified parameter. To the best of our knowledge, the model is the first local Gaussian process regression model to achieve linear memory complexity. Theoretical continuity properties of the model are proven. We demonstrate the efficacy of the resulting model on multi-dimensional regression tasks for streaming data. 
\end{abstract}

\section{Introduction}

Gaussian process (GP) regression is a flexible kernel method for approximating smooth functions from data. Assuming there is a \textit{latent function} which describes the relationship between predictors and a response, from a Bayesian perspective a GP defines a prior over latent functions. When conditioned on the observed data, the GP then gives a posterior distribution of the latent function. In practice, GP models are fit to training data by optimizing the marginal log likelihood with respect to hyperparameters of the \textit{kernel function}, $k$.

A notable shortcoming of GP regression is its poor scaling both in memory and computational complexity. In what we will hereafter refer to as \textit{full GP regression}, the posterior mean and variance can be directly computed at any finite number of points using linear algebra \citep{Gaussian_Processes_for_ML}. However, a computational bottleneck is caused by the necessary matrix inversion of the kernel matrix $\mathbf{K}$, which is well-known to have a time complexity of $\mathcal{O}(n^3)$, where $n$ is the number of observations. The $\mathcal{O}(n^2)$  memory complexity of storing $\mathbf{K}$ may also pose issues for massive data sets.

GP regression may also struggle to well-approximate latent functions which are \textit{non-stationary} \citep{gramacy_bayesian_2005,gramacy_bayesian_2008}. Non-stationary latent function's mean or covariance may vary over its domain. Non-stationarity may be induced in more subtle ways as well, such as heteroscedastic additive noise in the observed response.

Local GP regression is a class of models which address both of these problems, to varying extents. The commonality of these models is the assignment of observations to one of many \textit{local} GPs during training, and the aggregation of the local models' individual predictions. As a result of observations being assigned to a single local model, effectively only a block-diagonal approximation of the full kernel matrix $\mathbf{K}$ is maintained \citep{park_patchwork_2018,blockgp}, easing the time and memory complexity. The price of this flexibility and computational advantage is a potential decrease in predictive ability, relative to full GP regression, on tasks for which a full GP is well-suited.

A successful method for assigning observations to local GPs is partitioning the \textit{input space}, the domain of the latent function, and creating a local GP model for each cell of the partition. Existing local GP models may encounter various difficulties during the partitioning process, as detailed in Section~\ref{sec:related_work}, and/or
neglect a setting where only a sequential data source is available and partitioning must be performed sequentially.

This sequential setting is important since it encompasses tasks with changing dynamics. For example, 
in applications such as process quality monitoring \citep{yu_online_2012,liu_auto-switch_2015} and motion tracking \citep{5206580}, a sequential approach allows for the model to adapt to regimes which are yet-unobserved.

The primary contribution of this paper is an algorithm which recursively partitions the input space to construct local GPs using sequential observations. The resulting model is dubbed the \textit{splitting GP}. The algorithm is shown to have superior asymptotic time and memory complexity relative to other state-of-the-art local GP methods which can be used in this problem setting. By design of the algorithm, we also ensure an exact upper bound on the time complexity of updating the splitting GP model. We also prove theoretical properties of the model related to continuity of the predictions and empirically demonstrate the efficacy of the model. Additionally, a software implementation of the algorithm is provided, which leverages the computational advantages of the \texttt{GPyTorch} library \citep{gardner_gpytorch_2018}, to facilitate the use of the algorithm by others. 

\section{Related work}\label{sec:related_work}

It appears that the first exploration of local GP regression is due to \citet{rasmussen_infinite_2002}, as a special case of their mixture of GP experts model, where prediction at a point is performed by one designated ``expert'' GP. \citet{snelson_local_2007} further developed this idea, but only as a supplementary method to sparse GP regression. 

An adjacent line of research on treed GP models was begun by \citet{gramacy_bayesian_2005}. Treed GP models perform domain decomposition in the same manner as classification and regression trees \citep{breiman_classification_1984}, and fit distinct GP models to each resulting partition. Predictions are then formed using $k$-nearest neighbors. \citet{gramacy_bayesian_2005} found that local GP regression methods, such as treed GP models, were well-suited to non-stationary regression tasks since each leaf GP model in the tree could specialize to local phenomena in the data. Since the inception of this method, further advances have been made by Gramacy and collaborators \citep{gramacy_bayesian_2008,gramacy_local_2015}, particularly with respect to large scale computer experiments and surrogate modeling. It is acknowlegded by \citet{gramacy_bayesian_2005} and \citet{gramacy_bayesian_2008} that predictions by the treed GP model are not continuous in the input space. Additionally, the construction of the tree is performed probabilistically, and does not make use of the intrinsic structure of the data for domain decomposition.

Since the original proposal of local GP regression, several methods have been proposed which are adapted to the sequential setting. \citet{shen2006fast} reduced the prediction time and kernel matrix storage for isotropic kernels by building a kd-tree from the training data set.  \citet{nguyen-tuong_local_2009} proposed a method of local GP regression for online learning, which assigns incoming data to local models by similarity in the feature space and forms mean predictions by weighting local predictions. This \textit{local GP model} was among the first to consider a fully sequential setting. This model has two notable drawbacks: it suffers from discontinuities in its predictive mean, and depends on a hyperparameter which is difficult to tune in practice and strongly affects prediction performance. Another local GP method which may be used in the sequential setting is the robust Bayesian committee machine (rBCM) by \citet{deisenroth_distributed_2015}, which can be seen as a product of GP experts \citep{hinton_training_2002}. The rBCM emphasizes rapid, distributed computation over model flexibility. This is demonstrated by a) its assumption that the latent function is well-modeled by a single GP and b) consequent random assignment of observations to GP experts, rather than a partitioning-based approach. This modeling approach does not address potential non-stationarity of the latent function.

More recently, Park and collaborators have done significant work in this area, applying mesh generating procedures from finite element analysis \citep{park_efficient_2016,park_domain_2011} and the recursive Principal Direction Divisive Partitioning algorithm \citep{park_patchwork_2018,boley_principal_1998} to partition the input space for fitting local models. However, in these papers it is assumed that a substantial number of observations are available during the initial model construction to perform the partitioning procedure, particularly when using mesh generation methods. These models also suffer from  discontinuities at the boundaries of partition cells, an issue which the authors have creatively addressed by adding constraints to the hyperparameter optimization which force equality at finitely many boundary points, or by adding ``pseudo-inputs'' at the boundaries to induce continuity.

\section{Recursive splitting of local Gaussian processes}

As previous works have shown, there is a trade-off between the predictive ability of the aggregate model, the number of local models, and computation speed. Given this fact, we aim to construct a model which maintains strong predictive capability while keeping its computational demands below a pre-specified upper bound. We show that this can be accomplished in a straightforward manner by recursively \textit{splitting} GP models once they surpass a presupposed threshold in the number of observations. Splitting the model amounts to performing a clustering subroutine which divides the observations associated with the model into two subsets, and then fitting a new local model to each subset.

We consider streaming data that requires the model to permit sequential updating and prediction. A natural quantity of interest is the time \(\tau\) required for a single update; since \(\tau\) is a function of the size, \(m \times m\), of the kernel matrix \(\mathbf{K}_i\) of a local model indexed by $i$, \(m\) is then an interpretable parameter describing the period of splitting. The parameter \(m\) may be interpreted as the \textit{splitting limit}, the maximum number of observations which may be assigned to a local model before it is split. For the remainder of the paper, we will describe the splitting GP model, which is characterized almost entirely by this intuitive parameterization. Additionally, the full specification of the splitting GP algorithm may be found as pseudo-code in the appendix.

\subsection{Notation}
In preparation for the proceeding material, we define some notation. We assume some familiarity of readers with the theory of GPs and kernel methods \citep{Gaussian_Processes_for_ML}. 

The input data matrix and the response vector associated with the $i^{th}$ local model are denoted by \\$X_i=[\mathbf{x}_{i1} \, \mathbf{x}_{i2} \,\ldots \,\mathbf{x}_{in_i}]^T\in \mathbb{R}^{n_i\times M}$ and $Y_i=\left(y_{i1},y_{i2},\ldots,y_{in_i}\right)\in \mathbb{R}^{n_i}$, respectively, 
where $\mathbf{x}_{ij} \in \mathcal{X} := \mathbb{R}^M$ is a column vector and $n_i$ is the number of observations associated with the $i^{th}$ local model. 
We call $\mathcal{X}$ the \textit{input space}.

When creating the $i^{th}$ local GP, its \textit{center} is defined to be the centroid of $X_i$ 
as follows: 
\begin{equation}
    \mathbf{c}_i=\frac{1}{n_i}\sum_{j=1}^{n_i} \mathbf{x}_{ij} .
    \nonumber 
\end{equation}
Centers are critical to the assignment of observations to local GPs, as well as prediction, in the splitting GP model. Each time a new observation is assigned to a local model, its center is recomputed.

The \textit{kernel function} $k\!\!:\!\mathcal{X} \times \mathcal{X} \mapsto\mathbb{R}$ is a positive-definite, symmetric function. A vector $\boldsymbol{\theta}$ parameterizes the kernel, and is called the \textit{hyperparameter} of the GP model. For simplicity, we omit $\boldsymbol{\theta}$ from our notation. We use the term \textit{feature space} to refer to the reproducing kernel Hilbert space in which $k$ implicitly computes an inner product. 

We write $f \sim \mathcal{GP}(\mu,k)$ to say that the function $f$ is distributed as a GP with mean $\mu$ and kernel function $k$, and implicitly assume that the domain of $f$ is $\mathcal{X}$. The notation $f|X_i,Y_i$ denotes that $f$ is conditioned on the data $X_i,Y_i$. We use $\mathbf{x}^*$ to denote a \textit{test input} at which a prediction is to be made and $f^*|\mathbf{x}^*$ to be a GP conditioned on the test input, i.e. the posterior distribution.

\subsection{The splitting procedure}

To address modeling a potentially non-stationary latent function, each local GP model is split in a manner which centers the resulting child GPs on regions of different means. This is done by performing principal component analysis (PCA) on the training inputs associated with the \textit{parent model}. The first principal component gives the direction of most variance of the training inputs, which implies that the corresponding orthogonal hyperplane is the minimizer of within-cluster variance among all linear bisections, leading to Principal Direction Divisive Partitioning (PDDP) \citep{boley_principal_1998}. 

Two new GPs, which we call \textit{child GPs}, are then created, each of which is assigned the data of the parent model from one side of the hyperplane, as in Fig.~\ref{fig:pca_discontinuity}(a). This heuristic is based on the idea that, by fitting separate GPs to the subsets of training inputs which are maximally different, the model may best adapt to non-stationary behavior of the latent function. We utilize PDDP since the splitting procedure is computed efficiently in closed form, i.e. without the use of a convergent algorithm such as \textit{k}-means. It also allows for different choices in how the principal direction is computed. For example, in a setting where observations are fully sequential, the principal direction can be efficiently approximated using Oja's rule \citep{oja1997nonlinear} to avoid a singular value decomposition of the data matrix. 



\begin{figure}[h]
\centering
	{
		\subfigure[Splitting a 2-d input data set using PDDP into two subsets for two child GPs. Note the principal direction (orange vector), orthogonal hyperplane (green line), and the centroid ($\times$) of each subset colored in blue or red.]{\includegraphics[height=1.5in]{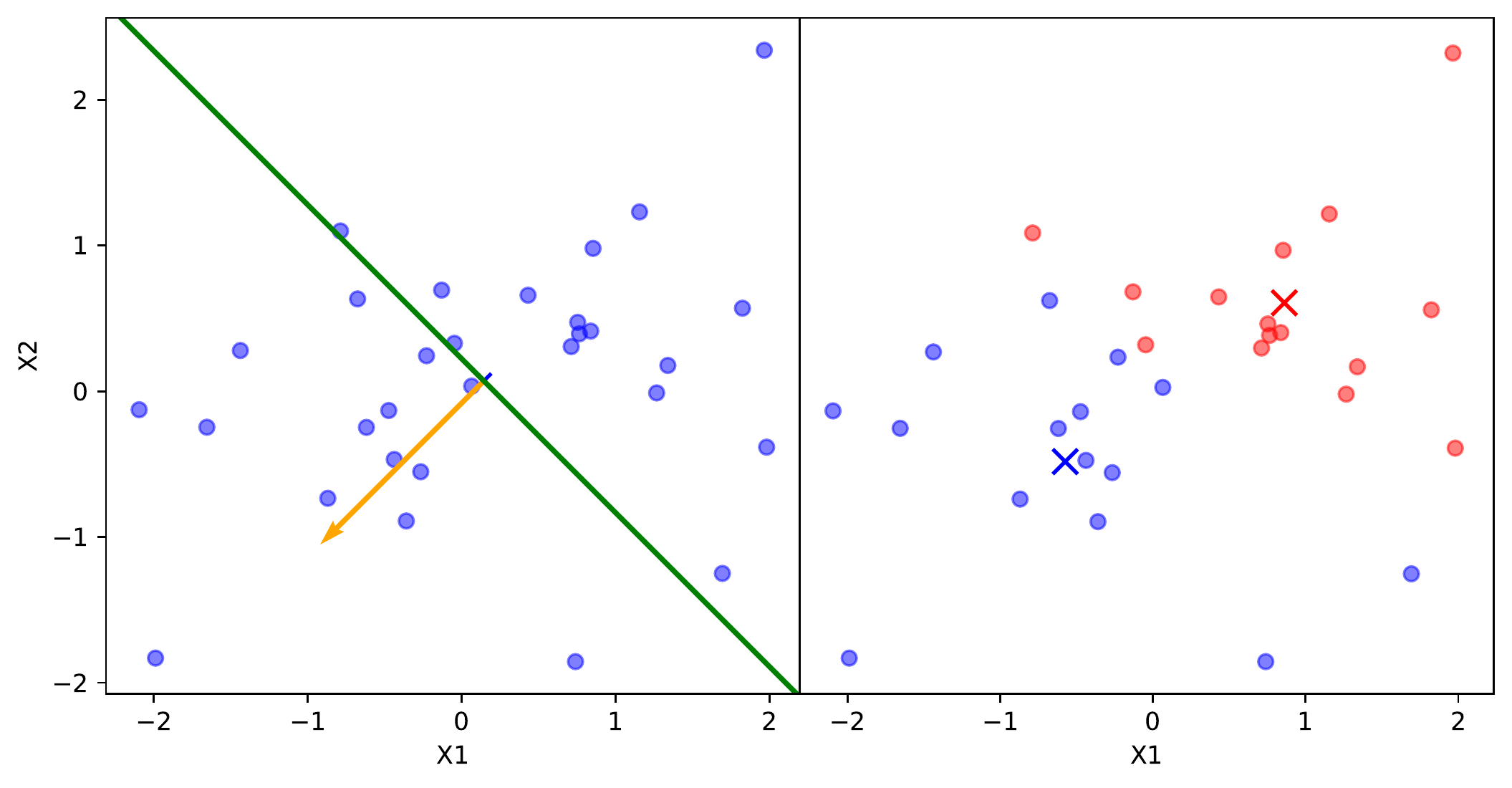}}
		\hfill
		\subfigure[The discontinuous predictive mean of $Y$ in $X$ is due to using only the closest child GP (left) instead of both child GPs (right). Note the center of each child GP (red vertical line).]{\includegraphics[height=1.5in]{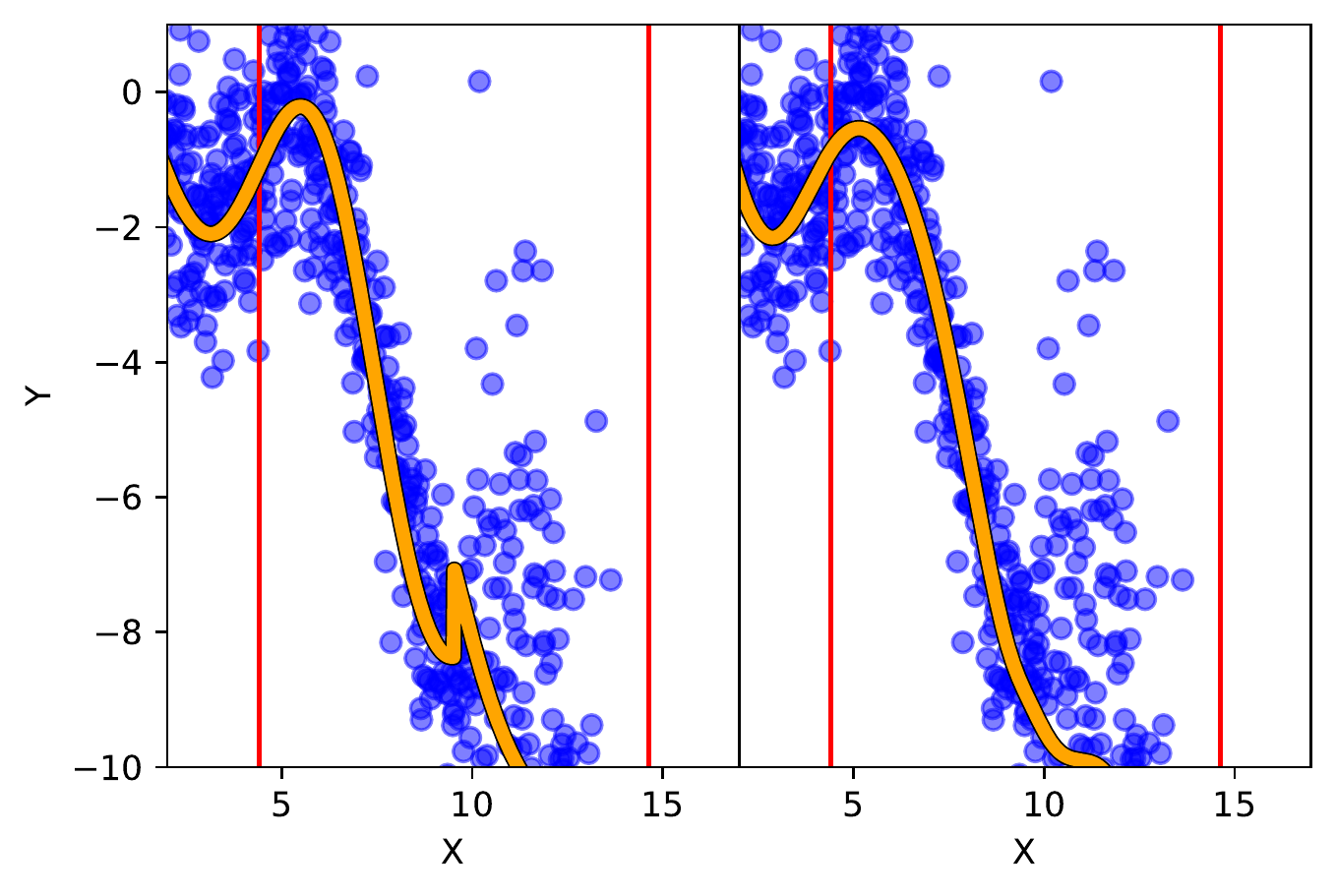}}
	}
	\caption{Illustration of how the proposed GP method uses PDDP for splitting in (a) and maintains continuous preditive mean in (b).} 
	\label{fig:pca_discontinuity}
\end{figure}

Critically, the prior of each child GP is taken to be the posterior of the parent model when conditioned on the \(m\) training observations, reflecting the Bayesian belief that the latent function's local covariance structure is not too dissimilar from its covariance structure in the larger subspace prior to splitting.

Formally, in the function space inference view of GP regression,
\begin{equation}
    f_{child} \sim f_{parent}|X_{parent},Y_{parent}
    \label{child_prior},
\end{equation}
where \(f_{child}\) is the prior of the child and $f_{parent}|X_{parent},Y_{parent}$ is the posterior of the parent given the training data \(X_{parent},Y_{parent}\). 
This assumption of local similarity is implicit in the use of many smooth kernel functions, particularly the radial basis function family, which are infinitely differentiable.

\subsection{Aggregating predictions of local models}

A new data point \(\left(\mathbf{x},y\right)\) is assigned to a child GP whose center is most similar to the predictor \(\mathbf{x}\) in the feature space, as determined by the kernel function. That is, for \(C\) child GPs, the data point is assigned to the child GP indexed by \[i_{assign}:=\underset{i=1,...,C}{\arg\!\max}\,k(\mathbf{c}_i,\mathbf{x}).\]
Similarly, the posterior mean at the test input \(\mathbf{x}^*\) is computed by weighting the prediction of each child GP by the relative similarity of its center to the test input in the feature space. This idea is based upon an interpretation of the predictive mean as weighting observations by their similarity to the point of prediction in the feature space, according to \citet{shen2006fast}. The same idea was also used by \citet{nguyen-tuong_local_2009}.

Unlike these previous works, we take a weighted average of \textit{all} predictions of child GPs, rather than only a subset, as follows: 
\begin{equation}
   \mathbb{E}[\,f^*|\mathbf{x}^*\,]=S^{-1}\sum_{i=1}^{C} k(\mathbf{c}_i,\mathbf{x}^*)\mathbb{E}[\,f^*_i|X_i,Y_i,\mathbf{x}^*\,]\label{prediction_eq},
\end{equation}
where \(
    S=\sum_{i=1}^{C} k(\mathbf{c}_i,\mathbf{x}^*)\label{s_eq}.
\)
This may be interpreted as weighting each child GP's mean prediction by the similarity of its local region to the test input \(\mathbf{x}^*\). In the next section, we will show that using predictions from \textit{all} child GPs has important consequences on the theoretical properties of the resulting model.


\section{Theoretical properties of the splitting GP model}
\subsection{Continuity of the splitting GP}
As a consequence of this weighting procedure and the choice of priors for the child GPs, the splitting GP model has some convenient properties. The proofs of the following propositions may be found in the appendix.

\begin{proposition}
\label{splitting}
Let \(f^*_{parent}|X_{parent},Y_{parent} \sim \mathcal{GP}(\mu,k)\) be the full GP prior to splitting and let \(\mathbf{x}^*\) be a test input. The child GPs \(f_i|X_i,Y_i \sim \mathcal{GP}(\mu_i,k),\,i=1,2\) from the first split have the property that, prior to being updated with any new observations, \(f^*|\mathbf{x}^*=f^*_{parent}|X_{parent},Y_{parent},\mathbf{x}^*\). That is, the predictive distribution is preserved by the splitting procedure.
\end{proposition}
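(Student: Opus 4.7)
The plan is to reduce both sides of the claimed equality to univariate Gaussians at $\mathbf{x}^*$ and then appeal to the definition of the splitting procedure to see that they coincide. Writing out the parent's posterior predictive at $\mathbf{x}^*$ first: since $(\mu,k)$ in the statement already denote the parent's posterior mean and kernel (the conditioning on $X_{parent},Y_{parent}$ has been absorbed into the symbols $\mu,k$), marginalising the GP at the single point $\mathbf{x}^*$ yields the Gaussian $\mathcal{N}\bigl(\mu(\mathbf{x}^*),\,k(\mathbf{x}^*,\mathbf{x}^*)\bigr)$.

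Next, I would do the same for the relevant child. Each child GP is, by construction, initialised as $\mathcal{GP}(\mu_i,k)$ with the \emph{same} kernel $k$ and with mean function $\mu_i$ chosen by the splitting procedure to coincide with the parent's posterior mean $\mu$ at every input the child can legitimately be queried on, $\mathbf{x}^*$ included. Because no new observations have yet been absorbed by the child, its predictive at $\mathbf{x}^*$ is simply the marginal of its prior, namely $\mathcal{N}\bigl(\mu_i(\mathbf{x}^*),\,k(\mathbf{x}^*,\mathbf{x}^*)\bigr)$. Comparing the two marginals then gives identical variance (the kernel expression is literally the same) and identical mean ($\mu_i(\mathbf{x}^*)=\mu(\mathbf{x}^*)$ by construction), so $f^*(\mathbf{x}^*)$ and $f^*_{parent}(\mathbf{x}^*)$ have the same law.

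The main obstacle is the bookkeeping around how the symbol $f^*\mid\mathbf{x}^*$ is produced from the two children $f_1,f_2$. If the paper routes $\mathbf{x}^*$ to a single child via a region indicator, the argument collapses to a one-line case split on which child owns $\mathbf{x}^*$. If instead the two children's marginals are combined (e.g.\ via a mixture or a product-of-experts rule), I would need to verify that the combination rule reduces to the parent's marginal when $\mu_1,\mu_2$ both agree with $\mu$ at $\mathbf{x}^*$; this should still be immediate because the kernel is unchanged by the split and both child means coincide with $\mu$ there. No delicate Gaussian conditioning calculations are required, since all of the conditioning work has already been carried out in building $\mu$ and $k$ before the split is performed.
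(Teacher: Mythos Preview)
Your high-level structure matches the paper's exactly: each child's prior is set to the parent's posterior (equation \eqref{child_prior} in the main paper), no new data have been absorbed, so each child's predictive at $\mathbf{x}^*$ coincides with the parent's. The paper then combines the two children and recovers the parent.

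The one place where your write-up is genuinely incomplete is the step you flag yourself as ``bookkeeping.'' The paper's prediction rule is neither hard routing nor a product-of-experts; it is the kernel-weighted average
\[
f^*\mid\mathbf{x}^* \;=\; S^{-1}k(\mathbf{c}_1,\mathbf{x}^*)\,f_1^*\mid\mathbf{x}^* \;+\; S^{-1}k(\mathbf{c}_2,\mathbf{x}^*)\,f_2^*\mid\mathbf{x}^*,
\qquad S=\sum_i k(\mathbf{c}_i,\mathbf{x}^*),
\]
and the entire content of the paper's proof is the observation that these two weights sum to $1$, so the expression is a convex combination $\alpha\cdot(\text{parent}) + (1-\alpha)\cdot(\text{parent})=\text{parent}$. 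Your claim that the reduction ``should still be immediate'' is correct for this rule, but it is \emph{not} automatic for an arbitrary aggregation scheme (a product-of-experts with precision weighting, for instance, would rescale the variance even when both experts agree). So the missing ingredient is precisely identifying the rule as an affine combination with unit weights; once you insert that, your argument and the paper's are the same.
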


This result is somewhat intuitive, since no additional evidence has been obtained which might alter our Bayesian beliefs about the latent function; we have simply altered the model structure. In agreement with this idea, the equality of the parent and childrens' predictive distribution no longer holds after any new data is assigned to one of the child GPs. Note that, in general, this relationship does not hold for any split after the first.

An important property of the splitting GP model is the preservation of continuity properties of the child GPs. We provide a result which shows the necessary and sufficient condition for continuity of the splitting GP model's predictions.

\begin{proposition}{}
\label{ms_continuity}
Suppose \(k\) is a kernel function and \(f_i|X_i,Y_i \sim \mathcal{GP}(\mu_i,k),\,i=1,...,C\). Then the random field given by \[f^*|\mathbf{x}^*=S^{-1}\sum_{i=1}^{C} k(\mathbf{c}_i,\mathbf{x}^*)\,f^*_i|X_i,Y_i,\mathbf{x}^*\] is mean square continuous in the input space if and only if the kernel function, $k$, is continuous. Under the same condition, the mean prediction \(\mathbb{E}[\,f^*|\mathbf{x}^*\,]\) is also continuous in the input space.
\end{proposition}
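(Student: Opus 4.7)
The plan is to reduce mean square (MS) continuity of $f^*|\mathbf{x}^*$ to continuity of its mean and covariance functions, using the standard characterization that a random field is MS continuous iff its mean is continuous and its covariance is continuous on the diagonal. Writing $\alpha_i(\mathbf{x}^*)=k(\mathbf{c}_i,\mathbf{x}^*)/S(\mathbf{x}^*)$, the field decomposes as $f^*|\mathbf{x}^*=\sum_i \alpha_i(\mathbf{x}^*)\,(f^*_i|X_i,Y_i,\mathbf{x}^*)$. Using independence of the child GPs, its covariance at two inputs equals $\sum_i \alpha_i(\mathbf{x}^*)\alpha_i(\mathbf{x}')\,k_i^{\mathrm{post}}(\mathbf{x}^*,\mathbf{x}')$ and its mean equals $\sum_i \alpha_i(\mathbf{x}^*)\,\mu_i^{\mathrm{post}}(\mathbf{x}^*)$, where $\mu_i^{\mathrm{post}}$ and $k_i^{\mathrm{post}}$ are the standard GP posterior mean and kernel of the $i$th child.

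For the forward direction, assuming $k$ is continuous, the closed-form posterior expressions $\mu_i^{\mathrm{post}}(\mathbf{x}^*)=\mu_i(\mathbf{x}^*)+k(\mathbf{x}^*,X_i)(K_i+\sigma^2 I)^{-1}(Y_i-\mu_i(X_i))$ and $k_i^{\mathrm{post}}(\mathbf{x},\mathbf{x}')=k(\mathbf{x},\mathbf{x}')-k(\mathbf{x},X_i)(K_i+\sigma^2 I)^{-1}k(X_i,\mathbf{x}')$ are continuous, since they are compositions and products of $k$ with fixed linear-algebraic operations on a finite training set. The weights $\alpha_i$ inherit continuity from $k$ (assuming $S(\mathbf{x}^*)>0$, as holds for kernels such as the squared exponential). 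Hence both the mean and the diagonal covariance of $f^*$ are finite sums of products of continuous functions, which yields MS continuity.

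For the converse, I would argue by contrapositive: if $k$ were discontinuous at some point $(\mathbf{a},\mathbf{b})$, then by exploiting freedom in the split configuration (for instance choosing $\mathbf{c}_i=\mathbf{a}$ for some $i$, or examining a child whose training inputs include $\mathbf{a}$), the discontinuity would propagate into either $\alpha_i$, $k_i^{\mathrm{post}}$, or $\mu_i^{\mathrm{post}}$, producing a discontinuity in the mean or diagonal covariance of $f^*$ at $\mathbf{x}^*=\mathbf{b}$, contradicting MS continuity. I expect this reverse direction to be the main obstacle: one must verify that the discontinuity is not accidentally cancelled by the normalization $S^{-1}$ or by cancellations in the posterior kernel formula, which likely requires choosing the test setup with some care.

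Finally, continuity of the mean prediction follows from the same decomposition: under continuity of $k$, $\mathbb{E}[f^*|\mathbf{x}^*]=\sum_i \alpha_i(\mathbf{x}^*)\,\mu_i^{\mathrm{post}}(\mathbf{x}^*)$ is a finite sum of products of continuous functions of $\mathbf{x}^*$, hence continuous, with no additional argument required beyond what was already established in the forward direction.
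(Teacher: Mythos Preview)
Your approach is essentially the same as the paper's: both invoke the standard characterization that a random field is mean-square continuous iff its mean and covariance functions are continuous, then write the mean (and, in your case, also the covariance) of $f^*$ as a normalized kernel-weighted sum of the child posterior quantities and observe that these are continuous precisely when $k$ is. Your treatment is actually more thorough than the paper's, which only explicitly checks the mean, asserts the posterior mean is ``linear'' in $\mathbf{x}^*$, and handles the converse only implicitly through the remark that $S^{-1}$ is continuous iff $k$ is.
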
{}

While it may seem obvious that the mean prediction is continuous in the input space, it is reassuring to know that the random field created by the splitting model has the same sufficient condition for mean square continuity as the underlying child GPs. Intuitively, we are computing a continuously weighted average of smooth random fields, so the resulting random field should also be smooth.

Note that it is not necessary to aggregate the predictions of each local model. Instead, one may elicit predictions only from the \(C_0<C\) local models most similar to the point of prediction, as measured by the kernel. This variation of the prediction method, as used in \citep{nguyen-tuong_local_2009}, may result in faster computation of predictions. However, we caution against this practice without careful consideration. This prediction method will result in a loss of continuity of the mean prediction \(\mathbb{E}[\,f^*|\mathbf{x}^*\,]\), and consequently the mean square continuity of the random field \(f^*|\mathbf{x}^*\), since the mean prediction is computed using a maximum function, which is not continuous. The discontinuity will manifest as sharp jumps in the mean predictions, as illustrated in Fig.~\ref{fig:pca_discontinuity}(b).

\subsection{Complexity analysis of the algorithm}

The algorithm for the splitting GP model has improved complexity in both memory and training/prediction time compared to the full GP regression. Additionally, it maintains some benefit in asymptotic complexity relative to other local GP methods.

The complexity of updating the splitting GP model with a single datum is bounded above by \(\mathcal{O}(m^3)\), corresponding to a matrix inversion of one child GP. The contribution of the PCA-based splitting procedure to the time complexity is negligible. Assuming the PCA is performed via naive singular value decomposition, the procedure would have \(\mathcal{O}(m^2)\) complexity amortized over $m$ sequential observations, which yields a linear additive term $m<m^3$. 

While each child GP has at most \(m\) observations, the average case will clearly be lower. It should be noted that we explicitly chose to not utilize a rank-one update of the Cholesky decomposition of the kernel matrix during the update procedure, a method which is used in other local GP methods such as in \cite{nguyen-tuong_local_2009}. We made this choice to permit updating the model with a batch of observations, in addition to fully sequential updating, which would require an update of rank greater than one. We also observed empirically that repeatedly updating a single child GP using rank-one updates would cause numerical issues if the kernel length-scale parameter is small.

Since the time complexity of the algorithm is characterized primarily by the parameter \(m\), the largest number of observations which may be associated with a single child model, it is straightforward to select an appropriate parameter value for applications requiring rapid sequential updating. After empirically determining the necessary wall-clock time needed for an update, the parameter may be adjusted appropriately.

The splitting algorithm also imposes a lower memory complexity in terms of the number of observations, $n$. It is well known that local GP methods effectively store only a block-diagonal approximation of the full covariance matrix \citep{park_patchwork_2018,blockgp}. In particular, when the splitting algorithm has \(n\) observations, $\lfloor n/m \rfloor$ child GPs have been created, each of which will store a kernel matrix with up to \(m^2\) entries. The resulting memory complexity of the algorithm is then \( \mathcal{O}(mn) < \mathcal{O}(n^2) \), where \(\mathcal{O}(n^2)\) is the memory complexity of a full GP regression. In contrast, the rBCM by \cite{deisenroth_distributed_2015} has an asymptotic memory complexity of \(\mathcal{O}(n^2/E)\), where \(E\) is the (constant) number of experts specified as a parameter.  \citet{nguyen-tuong_local_2009} did not present the memory complexity of their local GP model but, under the mild assumption that the input space is bounded, it can be shown that the asymptotic memory complexity is \(\mathcal{O}(n^2)\). Notably, the splitting GP model is, to the best of our knowledge, the only local GP model to achieve a linear memory complexity.

\section{Experiments}

The efficacy of the splitting GP model was experimentally evaluated on one synthetic and two real-world data sets. For each experiment, all models used the radial basis function kernel, with automatic relevance determination \citep{radford_ard} enabled. We compared the performance of the splitting GP model to the local GP regression of \citet{nguyen-tuong_local_2009}, the robust Bayesian committee machine (rBCM) of \citet{deisenroth_distributed_2015}, and full GP regression as a baseline comparison. Each of these models was chosen since they may be updated using sequential data, and make no use of a ``complete'' training data set to inform the model. This is in contrast to, for example, the patchwork kriging method of \citet{park_patchwork_2018}, which utilizes information from the entirety of data for domain decomposition. 

\begin{wrapfigure}{L}{.3\linewidth}
    \centering
    \includegraphics[width=\linewidth]{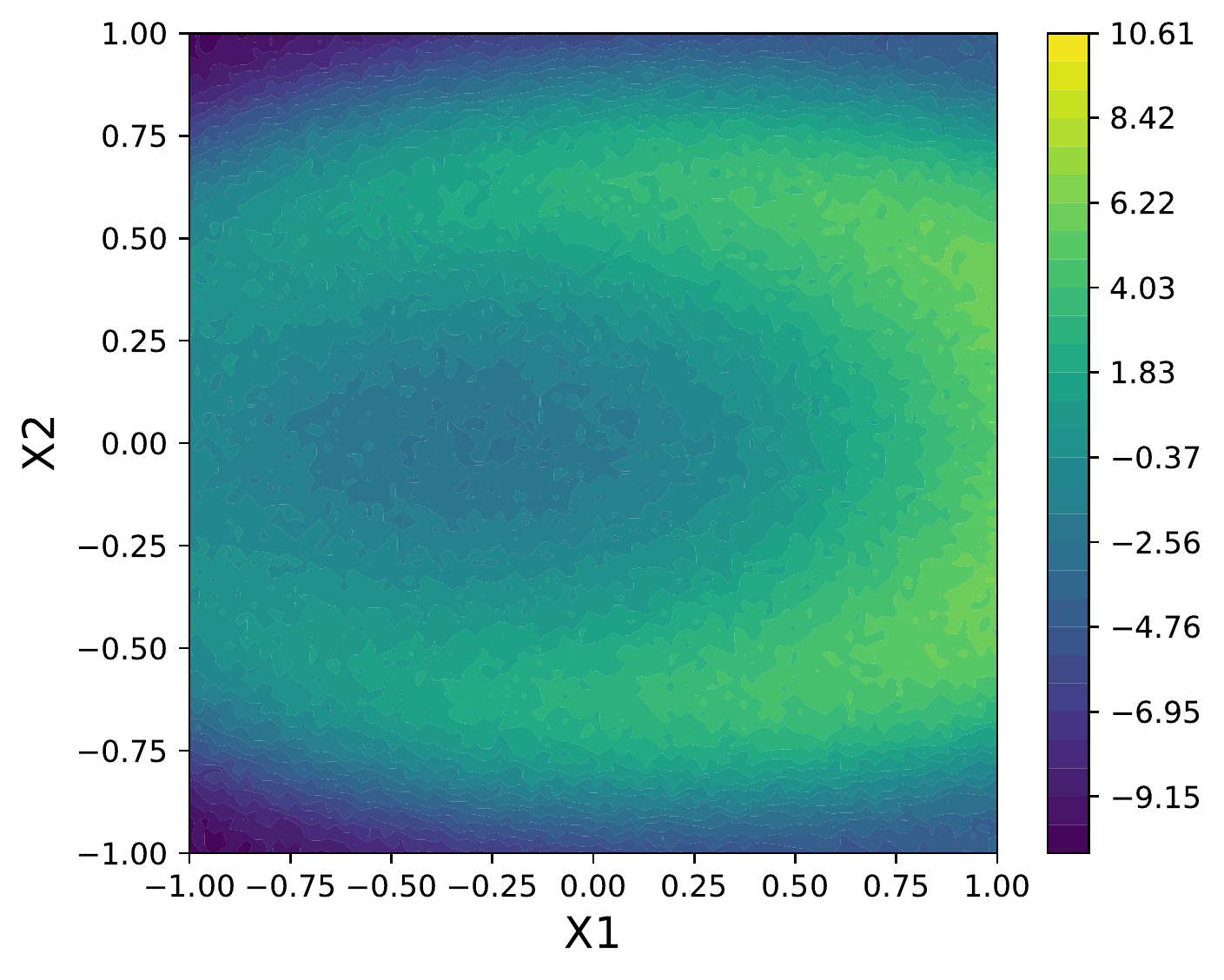}
    \caption{The contours of the response of the synthetic data set from \eqref{synthetic_response_eq}.}
    \label{fig:synthetic_data}\vspace{-1em}
\end{wrapfigure}{}

\subsection{Synthetic data with a non-stationary latent function}
The synthetic data set (see Fig.~\ref{fig:synthetic_data}) was constructed to have a non-stationary latent function $f$ of a two-dimensional predictor $(x_1,x_2)\in [-1,1]^2$. The response function is defined as 
\begin{align*}
y&=f(x_1,x_2)+\epsilon\\
&=5\sin(x_1^2+x_2^2)+3x_1+\epsilon, \stepcounter{equation}\tag{\theequation}\label{synthetic_response_eq}
\end{align*} 
where $\epsilon\overset{iid}{\sim}\mathcal{N}(0,\sigma)$ with $\sigma=(0.05)\underset{x_1,x_2}{\max}\,f(x_1,x_2)$.
To construct the synthetic data set, a grid of 10,000 points was constructed in the $x_1$-$x_2$ plane and the latent function evaluated at each point. A subset of 2500 observations were sampled uniformly, without replacement, from the resulting grid for each replicate of the experiment. This sampling over the grid ensured that no two observations are too close to one another, which may cause numerical issues during training.

For this experiment, we compared three metrics of the models' performance: prediction error (in mean squared error, or MSE), memory usage (in kilobytes, or kB), and training time (in seconds). Each metric was estimated using a 5-fold cross validation procedure. We utilized common random numbers \citep{CRN} as a variance reduction technique. Each experiment was replicated 10 times to reduce variability of results. In Fig.~\ref{fig:synthetic}, the shaded region shows the 95\% pointwise confidence region for the mean metric. The experiment was performed with different numbers of observations, ranging from 100 to 2500 in increments of 100, to demonstrate the relative data requirements for each model to converge. Each model was updated sequentially with single observations to simulate a streaming data setting.

For each alternative model compared, the experiment was replicated for a range of parameter values and the most favorable, in terms of MSE, results for each model are reported. For the splitting GP model, we used a splitting limit of $m=500$. For the rBCM, $E=10$ local experts were used. We found the parameter $w_{gen}$ of the local GP model difficult to tune and eventually used $w_{gen}=10^{-3}$ based on an extensive grid search (detailed in the appendix).

In Fig.~\ref{fig:synthetic}, one can see that the splitting model and full GP required relatively few observations to achieve strong predictive power. The rBCM was much slower to converge, and exhibited high variability in its MSE across replicates of the experiment. We attribute the greater variability in MSE to the rBCM's random assignment of data to GP experts. On the other hand, the splitting GP model exhibited remarkably low variability in MSE between replicates. It is worth noting that the splitting model's MSE slowly increased as it splits at intervals of 500 observations (see Fig.~\ref{fig:synthetic_zoom}).

The memory usage of the splitting GP model proved to be significantly lower than the full GP model, and marginally higher than that of the rBCM. However, for \textasciitilde2500 observations, it can be seen that the memory usage of the rBCM model began to surpass that of the splitting GP model. This is to be expected, since the asymptotic memory complexity of the rBCM is quadratic, as opposed to the linear complexity of the splitting GP model.

The training time of the rBCM and local GP models were found to be comparable, and the splitting GP model took slightly longer. The full GP took significantly longer to train. The change in regime of the training time of the full GP at \textasciitilde1100 observations is due to specialized numerical methods in the \texttt{GPyTorch} library \citep{gardner_gpytorch_2018}.

\begin{figure}
    \centering
    \subfigure[MSE]{\includegraphics[trim = .1in 0 0in 0in, clip,width=.32\linewidth]{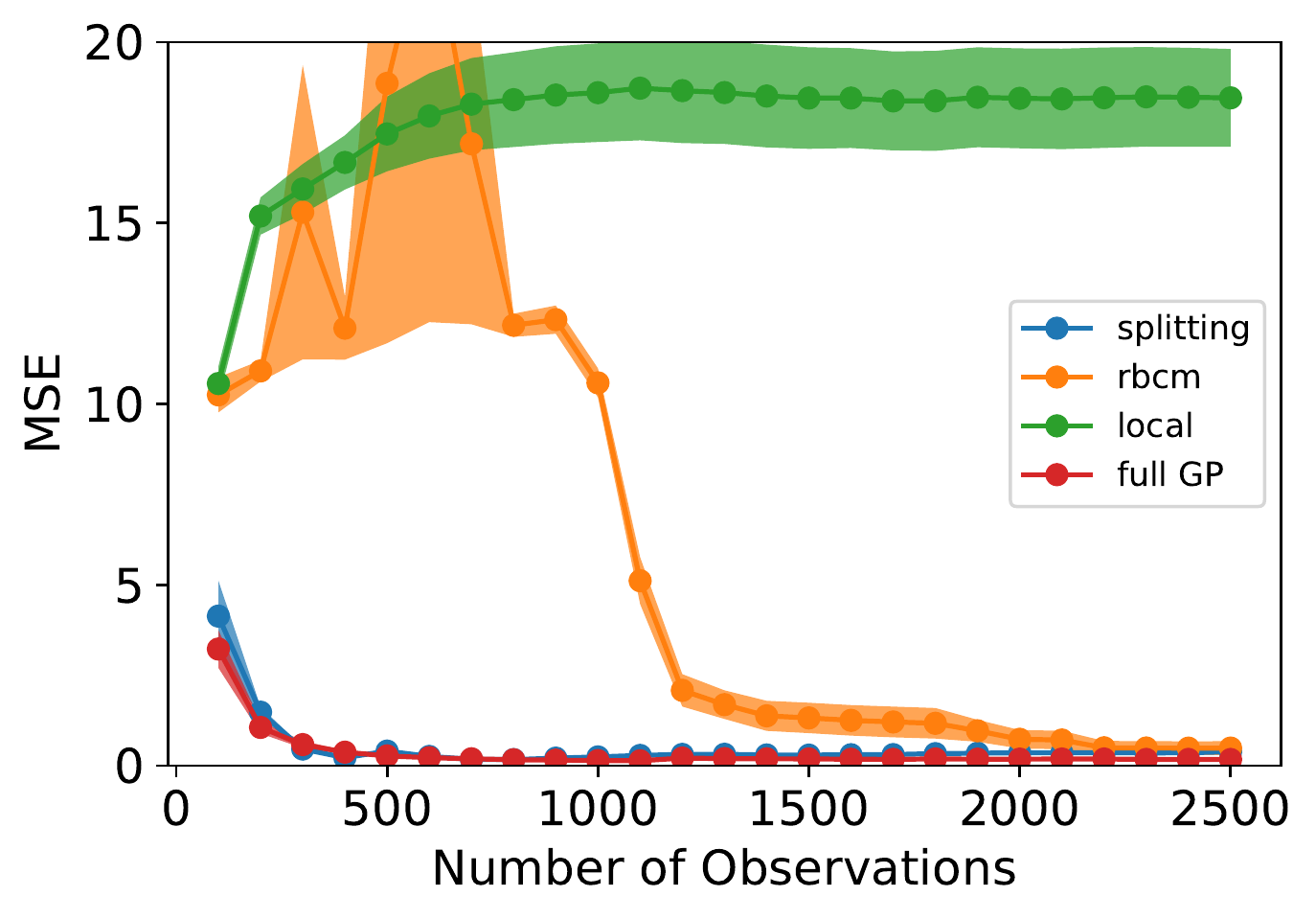}}
    \hfill
    \subfigure[Memory usage]{\includegraphics[trim = .1in 0 0in 0in, clip,width=.32\linewidth]{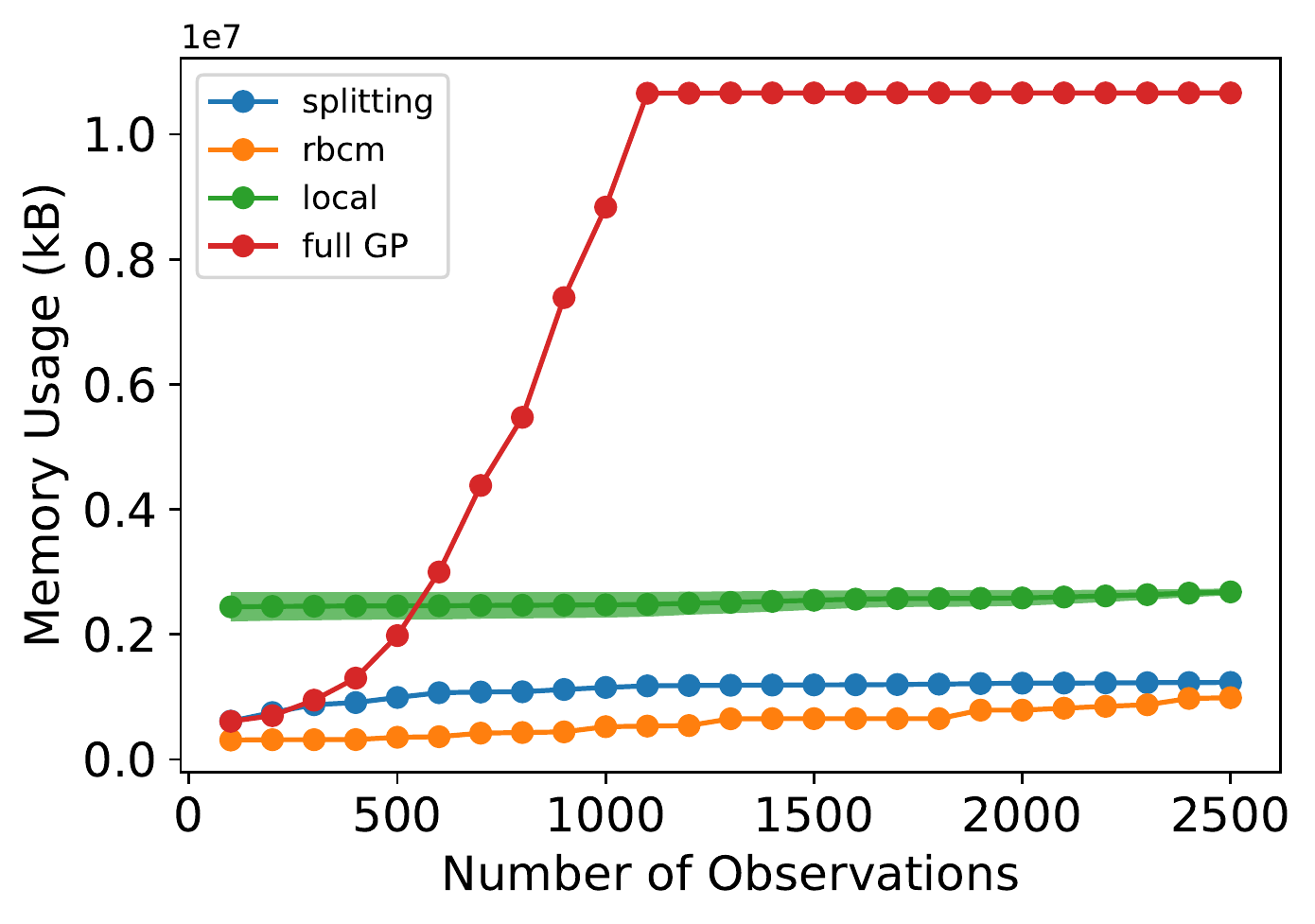}}
    \subfigure[Training time]{\includegraphics[trim = .1in 0 0in 0in, clip,width=.32\linewidth]{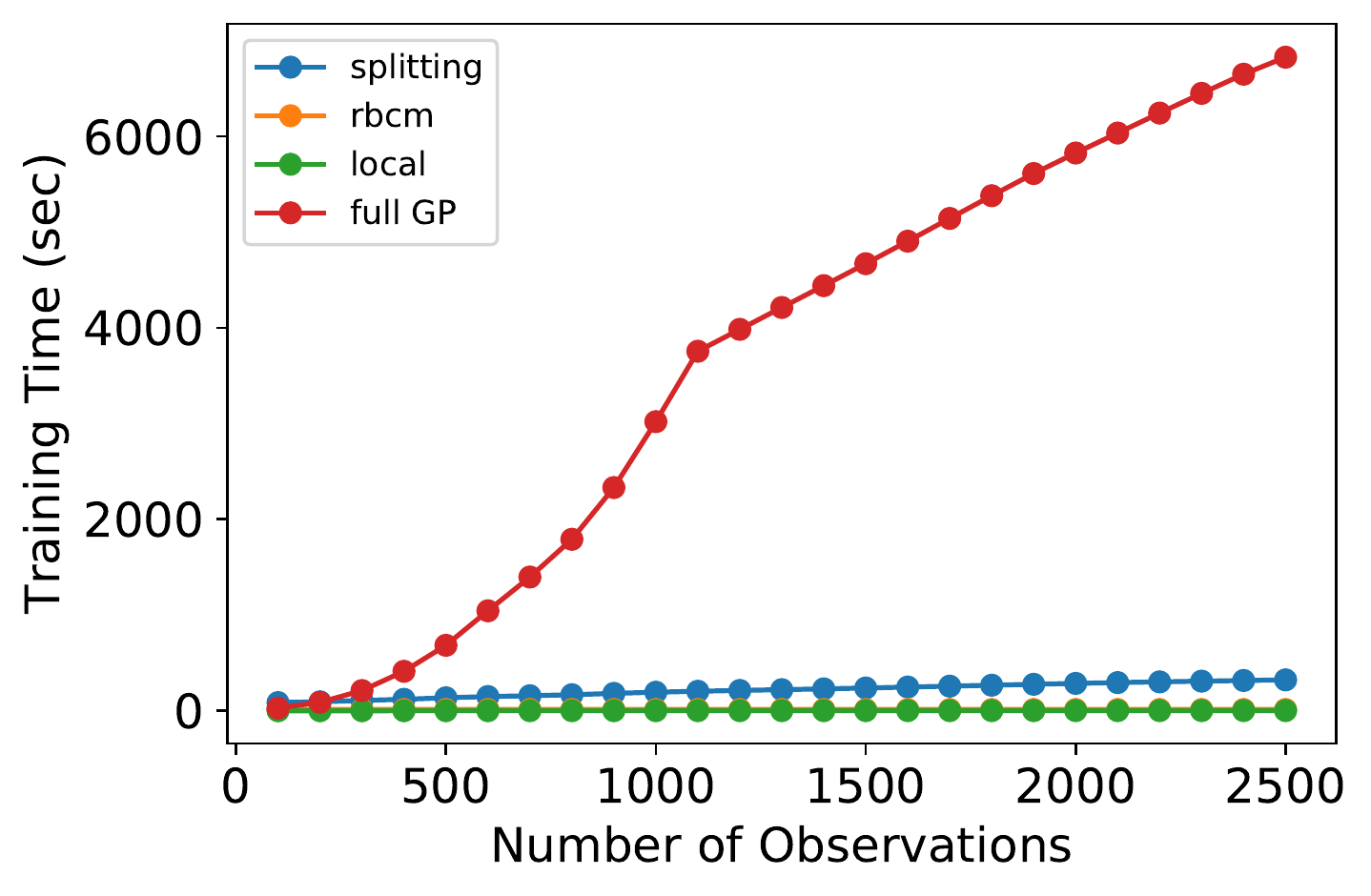}}
    \hfill
    \caption{Comparison of alternative models' MSE, memory usage, and training time for the synthetic data set from \eqref{synthetic_response_eq}. The shaded areas give the 95\% pointwise confidence region for each model.}
    \label{fig:synthetic}
\end{figure}

\begin{wrapfigure}{R}{.3\linewidth}
    \centering
    \includegraphics[trim = .1in 0 .5in .4in, clip,width=\linewidth]{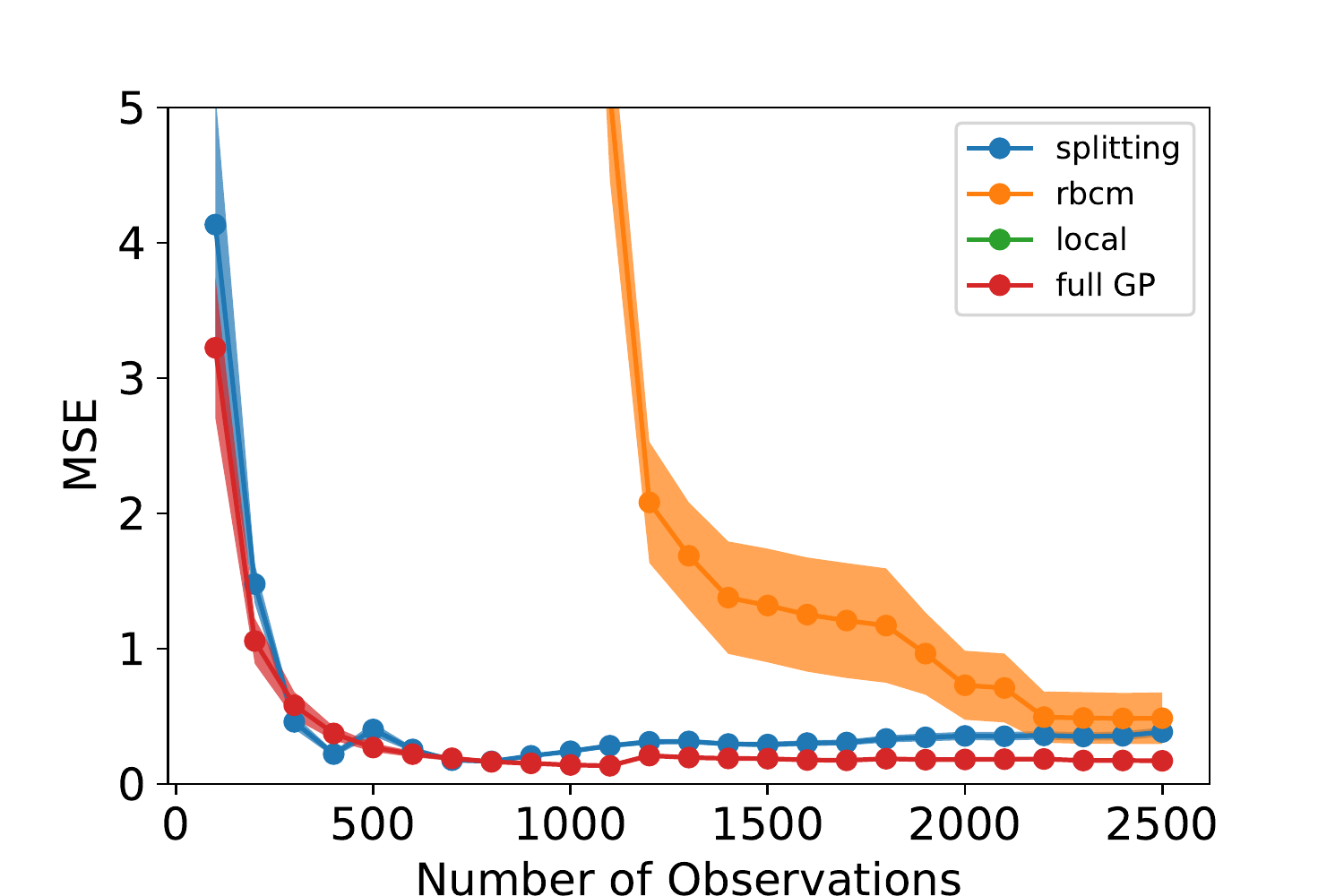}
    \caption{Zoom-in of Fig.~\ref{fig:synthetic}(a). The MSE of the splitting GP model slowly increases as it splits to maintain an efficient approximation of the full GP.}
    \label{fig:synthetic_zoom}
\end{wrapfigure}

\subsection{Real-world data in robotic control application}
The first real-world data set, \textit{kin40k}, consists of 40,000 records describing the location of a robotic arm as a function of an 8-dimensional control input \citep{nguyen_kin40k_data}. This data set was chosen since it exemplifies a task where the fast sequential updating and low memory profile offered by the splitting GP model are desirable. Furthermore, \textit{kin40k} is a popular benchmark for other GP regression methods \citep{deisenroth_distributed_2015,nguyen2014fast,lazaro-gredilla_sparse_2010}, and thus facilitates direct comparison.

We used the same training/test split of 10,000/30,000 points as in the previous work \citep{deisenroth_distributed_2015,nguyen2014fast,lazaro-gredilla_sparse_2010}. Observations were added to the models in batches, with each batch having a number of observations equal to the number of observations per local model. The results can be seen in Fig.~\ref{fig:real-world}(a). 
The splitting model's root-mean-square error (RMSE) is comparable with that of the rBCM, which is designed under a stronger assumption (that the latent function is well-modeled by a single GP), which is satisfied in this stationary regression task.  
In contrast, the local GP model struggled to achieve the same RMSE in this experiment (see the appendix for more detail on the difficulty tuning the parameter $w_{gen}$).

\begin{figure}[t!]
\centering
	{
		\subfigure[\textit{kin40k} data set.]{\includegraphics[height=1.8in]{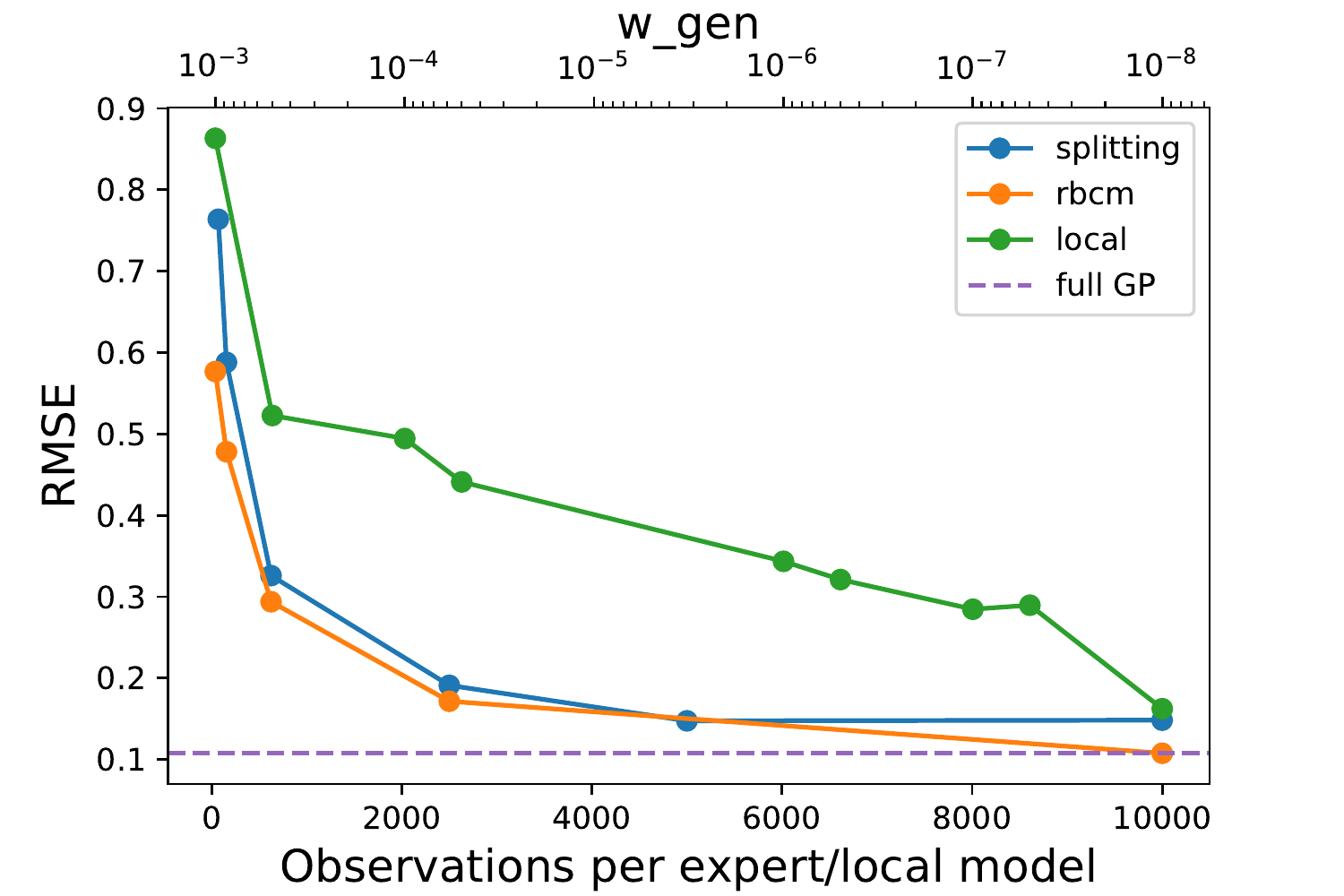}}
		\subfigure[\textit{powergen} data set.]{\includegraphics[height=1.8in]{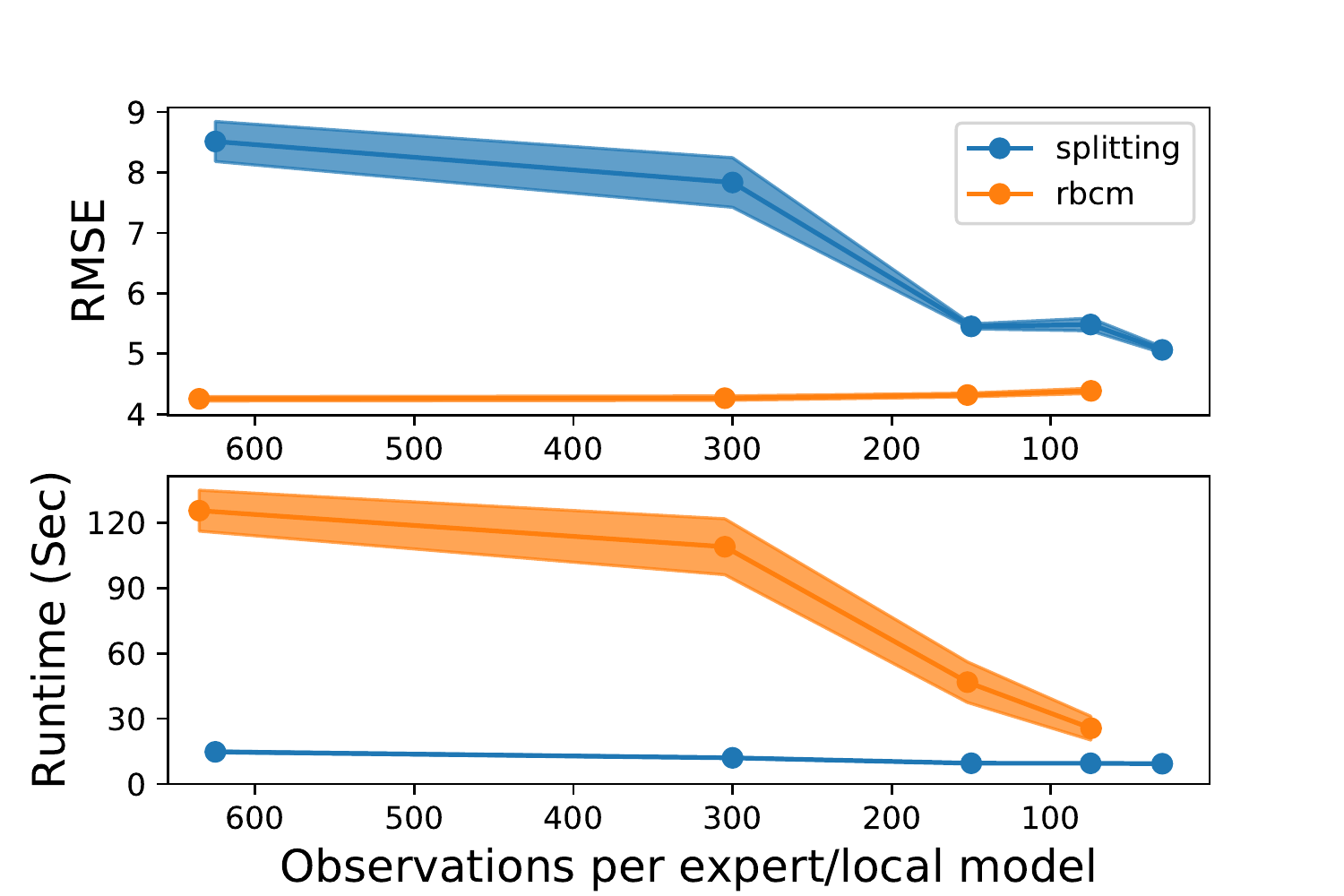}}		\caption{Comparison of alternative models for two real-world datasets. The bottom label on the $x$-axis for both plots gives the parameter values of the splitting GP model (observations per local model, $m$) and the rBCM (observations per expert). (a): The top label on the $x$-axis gives the value of the parameter \(w_{gen}\)  in the local GP model of \citet{nguyen-tuong_local_2009}. (b): Markers on the line plots show the average over 10 replicates, and the shaded regions denote the 95\% pointwise confidence intervals. For the final parameter value of 30 observations per expert, numerical errors occurred with the rBCM model, so no result is shown.}
		\label{fig:real-world}
	}
\end{figure}


\subsection{Real-world data in power plant control application}
 The second real-world data set is \textit{powergen}, which consists of four predictors describing control inputs of a combined cycle power plant, and a response of the net electrical energy production. This data set is due to \citet{kaya2012local} and \citet{tufekci2014prediction} and is publicly available \citep{Dua:2019}. We pre-processed the data to remove duplicate observations, leaving a total of 7622 observations.
 
 In the \textit{powergen} experiment, we compared both the predictive error (in RMSE) and the combined training and prediction time (in seconds) of the splitting GP model and rBCM. We compared only these two models, since they proved to be the most competitive local GP methods in earlier experiments. The data set was randomly divided into a 80\%/20\% train/test split, and each model evaluated for several different parameter values. Observations were added to the models in batches, with each batch having a number of observations equal to the number of observations per local model. The experiment was replicated 10 times, and common random numbers were used for splitting the data set and training the models for sharper comparisons between the models. The results can be seen in Fig.~\ref{fig:real-world}(b). 
 

The splitting GP model's RMSE decreased with the number of observations per local model, achieving a comparable performance with the rBCM.  
In terms of runtime, the splitting GP model was significantly faster, while having little variability between replicates. While the rBCM's runtime decreased with the number of observations per expert, 
its average runtime was 2.5-6 times longer than that of the splitting GP model, depending on the choice of parameters. 

\section{Conclusion}

In this paper, we have developed an algorithm for constructing splitting GP regression models for potentially non-stationary latent functions using streaming data. We have shown that splitting GP models attain comparable predictive performance, while addressing critical shortcomings of other local GP models, such as discontinuity of predictions, lack of flexibility for modeling non-stationary latent functions, and opaque parameters which may be challenging to tune. Furthermore, splitting GP models are shown to enjoy \textit{linear} memory complexity which, to the best of our knowledge, is the best among existing local GP methods, which typically have quadratic memory complexity. An implementation of the splitting GP model is available in the supplementary material to this article. 

\bibliography{GPPapers.bib}
\bibliographystyle{plainnat}

\end{document}